\title{\LARGE \bf
Efficient Constrained Multi-Agent Trajectory Optimization using Dynamic Potential Games
}
\author{Maulik Bhatt$^{1}$, Yixuan Jia$^{2}$, and Negar Mehr$^{1}$
\thanks{This work is supported by the National Science Foundation, under grants ECCS-2145134 CAREER Award, CNS-2218759, and CCF-2211542.}
\thanks{$^{1}$Maulik Bhatt and Negar Mehr are with the Department of Aerospace Engineering, University of Illinois Urbana-Champaign, USA
        {\tt\small  mcbhatt2@illinois.edu, negar@illinois.edu }}%
\thanks{$^{2}$Yixuan Jia is with the Department of Electrical and Computer Engineering, University of Illinois Urbana-Champaign, USA
        {\tt\small yixuanj3@illinois.edu}}%
}
\begin{document}

\maketitle
\thispagestyle{empty}
\pagestyle{empty}

\begin{abstract}

Although dynamic games provide a rich paradigm for modeling agents' interactions, solving these games for real-world applications is often challenging. Many real-world interactive settings involve general nonlinear state and input constraints that couple agents' decisions with one another. In this work, we develop an efficient and fast planner for interactive trajectory optimization in constrained setups using a constrained game-theoretical framework. Our key insight is to leverage the special structure of agents' objective and constraint functions that are common in multi-agent interactions for fast and reliable planning. More precisely, we identify the structure of agents' cost and constraint functions under which the resulting dynamic game is an instance of a constrained dynamic potential game. Constrained dynamic potential games are a class of games for which instead of solving a set of \emph{coupled} constrained optimal control problems, a constrained Nash equilibrium, i.e. a Generalized Nash equilibrium, can be found by solving a \emph{single} constrained optimal control problem. This simplifies constrained interactive trajectory optimization significantly. We compare the performance of our method in a navigation setup involving four planar agents and show that our method is on average 20 times faster than the state-of-the-art. We further provide experimental validation of our proposed method in a navigation setup involving two quadrotors carrying a rigid object while avoiding collisions with two humans.    

\end{abstract}

\IEEEpeerreviewmaketitle

\vspace{-0.1cm}
\section{Introduction}
\vspace{-0.1cm}
Many robotic applications such as autonomous driving, crowd-robot navigation, and delivery robots involve multi-agent interactions where a robot must interact with either humans or other robots in the environment. In these applications, each agent may need to satisfy some constraints such as collision avoidance and goal constraints.
Trajectory planning in such settings is a challenging task as agents' decisions are normally coupled through their objectives and constraints. For example, agents' actions may be coupled through collision avoidance constraints that every agent tries to maintain.
Dynamic games provide a strong mathematical foundation for reasoning about such couplings among the agents in sequential decision-making tasks~\cite{bacsar1998dynamic}, where the interaction outcome is captured by the equilibria of the dynamic game. 
However, finding equilibria of dynamic games is challenging as it normally involves solving a set of nonlinear coupled optimal control problems subject to nonlinear state and action constraints. As a result, solving for equilibria of such games is computationally demanding.

In this work, we identify a structure under which constrained multi-agent interactive planning reduces to solving a single constrained optimal control problem. This reduction simplifies the planning problem significantly and results in remarkably faster and more efficient solution methods.
More precisely, we identify the structure of agents' cost and constraint functions under which, the resulting dynamic game is an instance of a constrained dynamic potential game. Potential games are a well-studied class of games for which a Nash equilibrium always exists, and an equilibrium can be found by solving a single optimization problem~\cite{monderer1996potential}. In this work, we show that constrained dynamic potential games provide a simple and efficient method for planning in constrained interactive domains.

We introduce the single constrained optimal control problem whose solutions will be constrained Nash equilibria -- a.k.a Generalized Nash equilibria -- of multi-agent interactions. The resulting constrained optimal control problem can be solved by any generic solver. In this work, we show that one can use off-the-shelf standard solvers such as ALTRO~\cite{howell2019altro} and do-mpc \cite{LUCIA201751}, for efficient multi-agent trajectory planning with nonlinear state and action constraints. 
Our main contributions are the following: 
\begin{itemize}
    \item We identify the structure under which constrained interactive trajectory planning is an instance of a constrained dynamic potential game.
    \item We develop an efficient algorithm for seeking Generalized Nash equilibria in constrained interactive settings.
    \item We showcase the capabilities of our method in a four-agent planner navigation setting and compare the performance of our algorithm with the state-of-the-art. We also validate our method in an experiment setting where two drones carry an object while moving around humans.
\end{itemize}

\vspace{-0.6cm}
\section{Related Work}
\vspace{-0.1cm}
\subsection{Game-Theoretic Planning}
\vspace{-0.1cm}

\begin{figure*}[t]
    \centering
    \vspace{0.25cm}
    \includegraphics[scale=0.63]{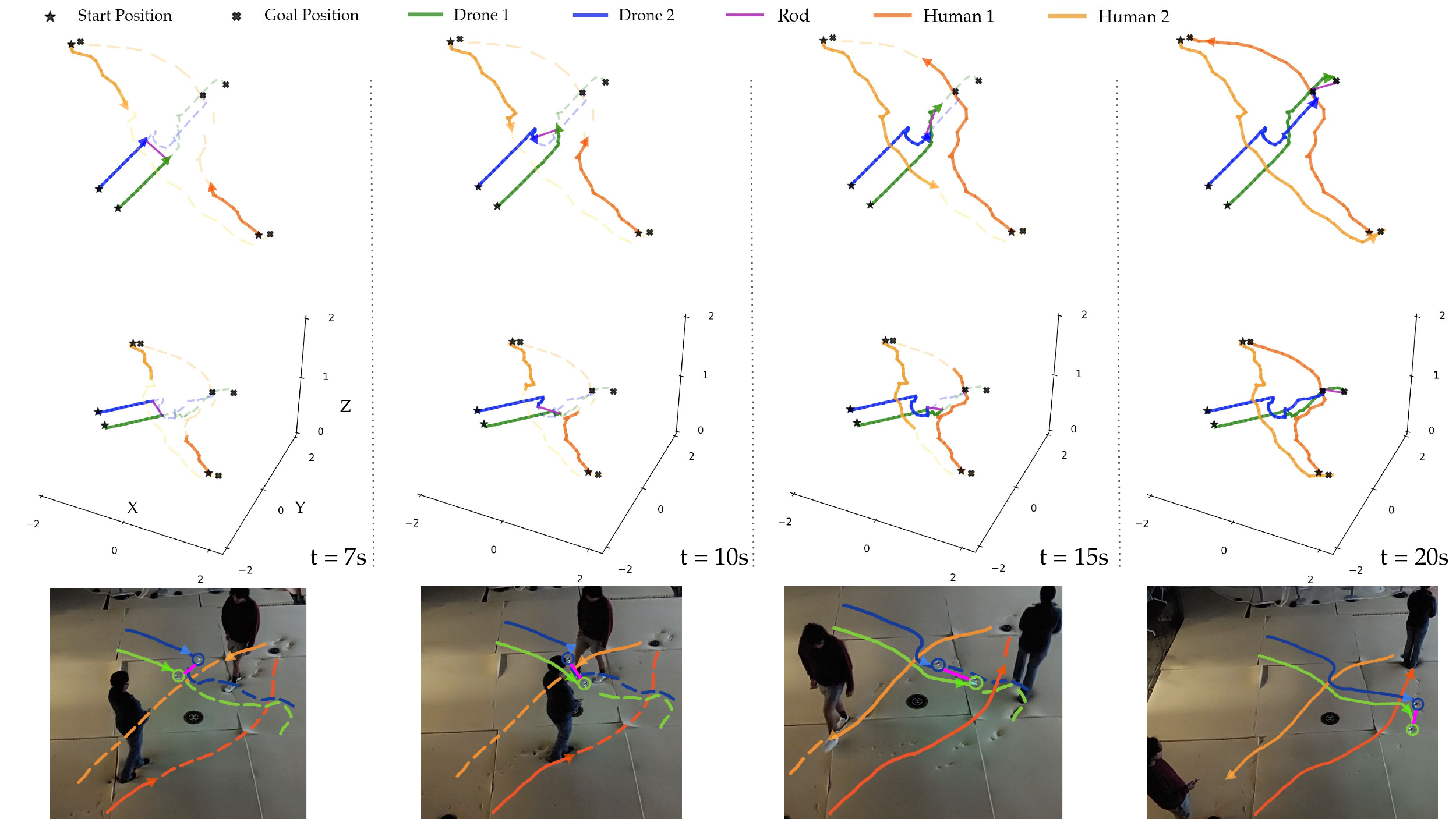}
    \caption{Visualizations and the corresponding snapshots of the hardware experiment. 
    The two quadrotors and two humans need to reach their designated goal positions while avoiding collision with others. The two quadrotors are carrying a rod. In this case, each quadrotor needs to account for the motions of humans as well as the other quadrotor as they share the constraint imposed by the rod.
    As shown in the figure, with our algorithm, the two quadrotors are able to avoid nearby humans by changing their orientations elegantly while carrying the rod. }
    \label{fig:hardware_experiment}
    \vspace{-0.55cm}
\end{figure*}

Various methods have been proposed for tractable game-theoretic planning. Initially, a Stackelberg equilibrium was considered in two-player games for modeling the mutual coupling between two agents ~\cite{sadigh2016planning,liniger2019noncooperative,yoo2012stackelberg}. In the context of autonomous driving, a Markovian Stackelberg strategy was developed using hierarchical planning in~\cite{fisac2019hierarchical}. However, the proposed dynamic programming approach suffers from the curse of dimensionality. Moreover, it was shown that Stackelberg equilibria may not be expressive enough for capturing the mutual couplings among the agents and may fail even in simple settings~\cite{facchinei2010generalized}.


Game-theoretic methods that seek Nash equilibria were later developed to relax the asymmetry of Stackelberg equilibria and enable planning with more than two agents. Iterative best-response methods were developed in~\cite{spica2020real, wang2019game} for autonomous racing. 
However, iterative best-response methods may require a large number of iterations, and; thus, may be computationally prohibitive. Sequential
linear-quadratic methods were applied to two-player zero-sum differential games in~\cite{mukai2000sequential,tanikawa2012local}.
Methods similar to differential dynamic programming were developed for game-theoretic planning. These methods are fast enough to be run in real-time and scale to more than two agents~\cite{fridovich2020efficient,di2018differential}. Similar methods were
proposed in~\cite{wang2020game,mehr2023maximum} for finding equilibria of stochastic dynamic games where planning was considered in the presence of uncertainties. However, these methods cannot handle state and input constraints. In such methods, task constraints can only be included as soft constraints in the objective functions of the agents. 

\vspace{-0.2cm}
\subsection{Generalized Nash Equilibria}
\vspace{-0.1cm}
When agents are subject to state and input constraints that couple their decisions, Generalized Nash equilibria must be sought. For static problems, various methods have been proposed for seeking generalized Nash equilibria ranging from penalty methods to Newton-style methods~\cite{facchinei2009generalized,facchinei2010penalty}. For dynamic games, in~\cite{cleac2019algames}, open-loop Nash equilibria of constrained dynamic games were found using a quasi-Newton root-finding algorithm while enforcing constraints using an augmented Lagrangian
formulation. An algorithm for approximating  Generalized feedback Nash trajectories was presented in~\cite{laine2021computation} using a Newton-style method.

\vspace{-0.2cm}
\subsection{Potential Games}
\vspace{-0.2cm}
Potential games are a well-studied class of games. In~\cite{rosenthal1973class,monderer1996potential}, static potential games with pure Nash equilibria were studied. 
Potential games appear in many engineering applications. Their simplicity and tractability have warranted their implementation across various applications such as resource allocation, cooperative control, power networks, and communication networks~\cite{zazo2014control,zazo2016dynamic, arslan2007autonomous,marden2009cooperative,candogan2010near}. Recently, potential games were used in~\cite{kavuncu2021potential, 10178387, williams2023distributed} for interactive planning. However, no constraints were considered in~\cite{kavuncu2021potential, williams2023distributed}. In this paper, we generalize these works and demonstrate how potential games can facilitate fast and efficient constrained interactive trajectory planning in multi-agent domains.

\vspace{-0.2cm}
\section{Problem Formulation}


Let $\aset := \{1,\ldots,N\} $ denote the set of the agents' indices. Let $\astates^i \subseteq \bR ^{n_i}$ be the set of states for each agent $i \in \aset$, where $n_i$ is the dimension of the state space of agent $i$. The overall state space of the game is denoted as $\astates := \astates^1\times \cdots\times\astates^{N} \subseteq \bR^n $ where $n = \sum_{i\in\aset}n_i$ is the state dimension of the game. We assume that the game is played over $T$ time steps. The size of each time step is denoted by $h$.  
The state-vector of the game at time step $k, 0 \leq k \leq T$ is denoted by $ x_k := \left(x^1_k,\ldots,x^N_k\right) \in \astates$, where $x^i_k \in \astates^i$ denotes the state-vector of agent $i$ at time step $k$. Note that the state vector of the system contains the state of all the agents. For each agent $i \in \aset$, the action set is denoted by $\actionset^i \subseteq \bR^{m_i}$ where $m_i$ is the dimension of the action space of agent $i$.
The set of actions of the game is denoted by $\actionset = \actionset^1\times\cdots\times\actionset^N \subseteq \bR^{m}$ where $m=\sum_{i\in\mathcal{N}}m_i$ is the control dimension of the game. The action or action vector of the game at time step $k$ is denoted by $ u_k := \left(u^1_k,\ldots,u^N_k\right) \in \actionset$ where $u^i_k \in \actionset^i$ is the action vector of agent $i$ at time step $k$. For every agent $i$ in $mathcal{N}$, let $\actionset^{-i} := \Pi_{j\neq i}\actionset^j$ be the set of actions for all agents except agent $i$. We also define $u^{-i}_k := (u^1_k,\ldots,u^{i-1}_k,u^{i+1}_k,\ldots,u^N_k) \in \actionset^{-i}$ as the vector of all agents' actions except agent $i$ at time step $k$. By a slight abuse of notation, we can write $u_k = (u^i_k,u^{-i}_k)$. The discrete-time system dynamics are defined by $f: \astates\times\actionset\times\{0,1,\ldots,T-1\}\rightarrow \astates$:
\vspace{-0.2cm}
\begin{equation}
    x_{k+1} = f(x_k,u_k,k).\label{sys_dynamics}
\end{equation}
\vspace{-0.6cm}

\noindent In order to account for constraints, we assume that there is an inequality constraint function $g:\astates\times\actionset\times\{0,1,\ldots,T-1\}\rightarrow \bR^c$ where $c$ is the number of constraints. The inequality constraints are defined as
\vspace{-0.2cm}
\begin{equation}\label{constraints}
    g(x_k,u_k,k) \leq 0,
\end{equation}
\vspace{-0.6cm}

\noindent where the inequality is interpreted elementwise. For the terminal time-step, the inequality constrained is only a function of the terminal state, $g(x_T,T)\leq0$.
It is to be noted that any equality constraint can also be included as a combination of two inequality constraints. Also, we assume that constraints that capture coupling between agents are shared among the agents. Let $\mathcal{C}_k$ at each time step $k$ be the set of all the feasible states $x_k$ and actions $u_k$. We can define $\mathcal{C}_0 = \astates\times\actionset \cap \{(x_0,u_0): g(x_0,u_0)\leq 0\}$ and $\mathcal{C}_k = \{\{\astates \cap \{x_k \mid x_k = f(x_{k-1},u_{k-1},k-1)\} \}\times\actionset\}\cap \{(x_k,u_k): g(x_k,u_k)\leq 0\}$ for $k\in\{1,\ldots,T-1\}$. Furthermore, at the terminal time step $T$, no action is being taken, and; hence, the constraint function will only be a function of states, and the corresponding constraint set will be $\mathcal{C}_{T} = \astates \cap \{x_{T}: g(x_{T},T) \leq 0 \}$.

We denote the strategy space for agent $i$ as $\Gamma^i$. Let the strategy $\gamma^i \in \Gamma^i$ of agent $i$ be given by $ \gamma^i:\astates\times\{0,1,\ldots,T\}\rightarrow\actionset^i$ which determines the actions of the agent at all time instants. We consider open-loop strategies that are only a function of the system's initial state and time. Therefore, we have the following relation, $\gamma^i(x_0,k) := u^i_k$.

We would like to point out that in this paper, we focus on finding open-loop equilibria of the dynamic game, which implies that at equilibrium, the entire trajectory of each agent is the best response to the trajectories of all the other agents for a given initial state of the system, i.e., the control action of every agent is only a function of time step $k$ and the initial state. We will solve for open-loop Nash equilibria repeatedly in a receding-horizon fashion to adapt to new information that is obtained over time and mimic a feedback policy. We acknowledge that for general dynamic games, due to the difference between the information structure of open-loop and feedback Nash equilibria, the two concepts may result in different behaviors. However, for many trajectory planning purposes, a receding-horizon implementation of open-loop equilibria results in reasonable approximations. 

We denote the combined strategy of all the agents by $\gamma := (\gamma^1,\ldots,\gamma^N) \in \Gamma$ where $\Gamma := \Gamma^1\times\cdots\times\Gamma^N$ is the strategy space of the whole system. We also denote $\gamma = (\gamma^i,\gamma^{-i})$ where $\gamma^{-i} = (\gamma^1,\ldots,\gamma^{i-1},\gamma^{i+1},\ldots,\gamma_N)$ is the tuple containing strategies of all agents except for agent $i$. Note that a given strategy $\gamma$ identifies a unique set of actions for all time instants, $\{u_k\}_{k\in\{0,1,\ldots,T-1\}}$, and; therefore, there is an equivalence between the two $\left(\gamma\equiv\{u_k\}_{k\in\{0,1,\ldots,T-1\}}\right)$. Hence, for simplicity, we use strategies and actions interchangeably from now on. Furthermore, in the case of open-loop strategies, actions at all time steps can be determined by the initial state of the system, and a given strategy will in turn determine the states of the system at each time step. Therefore, the state $x_k$ can be written as a function of strategies, initial states, and the time step, i.e. $x_k \equiv x_k(\gamma,x_0)$. We omit denoting this dependence when it is obvious from the context but denote explicitly when required.

We assume that every agent has a cost function $J^i:\astates\times\Gamma \rightarrow \bR$ that depends on the initial state and the strategies of all agents via
\vspace{-0.23cm}
\begin{equation}
    J^i(x_0,\gamma) = S^i(x^i_{T},T) + \sum_{k=0}^{T-1}L^i(x^i_k,\gamma(x_0,k),k),
\end{equation}
\vspace{-0.37cm}

\noindent where $L^i$ and $S^i$ are the running and terminal costs of agent $i$ respectively. Note that we assume that every agent's cost depends on its own state and the actions of all agents. The agents' couplings are captured through the shared task constraints. The goal of each agent is to choose a strategy $\gamma^i$ that minimizes its cost function while satisfying the constraints. However, since generally, it is not possible for all agents to optimize their costs simultaneously, we model the outcome of the interactions between the agents as equilibria of the underlying dynamic game. 
We represent the constrained dynamic game in a compact form as $\mathcal{G} := \left( \aset, \{\Gamma_i\}_{i\in\aset}, \{J_i\}_{i\in\aset},\{\mathcal{C}_k\}_{k\in\{0,\ldots,T\}},f \right)$. In particular, we model the outcome of interaction among the agents by open-loop generalized Nash equilibria of the underlying dynamic game. The open-loop generalized Nash equilibria of the game are defined as:
\begin{definition}\label{gen_NE}
An open-loop generalized Nash equilibrium of a game $\mathcal{G} := \left( \aset, \{\Gamma_i\}_{i\in\aset}, \{J_i\}_{i\in\aset},\{\mathcal{C}_k\}_{k\in\{0,\ldots,T\}},f \right)$ is a feasible strategy $\gamma^{*} = \left( {\gamma^1}^*,\ldots,{\gamma^N}^* \right)$ for which the following holds for each agent $i \in \aset$:
\vspace{-0.08cm}
\begin{align}\label{eq:Nash-definition}
    & J^i(x_0,\gamma^{*}) \leq   J^i(x_0,{\gamma^{i}},{\gamma^{-i}}^*) 
    \nonumber\\
    & \left(x_k,{\gamma^{i}}(x_0,k),{\gamma^{-i}}^*(x_0,k)\right) \in \mathcal{C}_k, k \in \{0,\ldots,T-1\}, \nonumber \\
    & \; x_{T} \in \mathcal{C}_{T}.
\end{align}
\end{definition}

Intuitively, at equilibrium, no agent can decrease their cost function by unilaterally changing their strategies to any other feasible strategy. In generalized Nash equilibria, the strategies of the agents are further coupled due to the joint constraints of the agents. Note that solving~\eqref{eq:Nash-definition} requires solving a set of $N$ coupled constrained optimal control problems which are normally hard to solve tractably for robotic systems with general nonlinear dynamics, cost, and constraint functions. 


\section{Constrained dynamic potential games}
There is a class of dynamic games called dynamic potential games for which we can associate a single multivariate optimal control problem whose solutions are the Nash Equilibria of the original game. 
Our key insight is that multi-agent constrained trajectory optimization can be cast as a dynamic potential game. We focus our analysis on a generalized version of dynamic potential games where the states and inputs are coupled through task constraints. In the following, we define constrained dynamic potential games from \cite{zazo2016dynamic}:
\begin{definition}
A noncooperative constrained dynamic game $\mathcal{G}$ is a constrained dynamic potential game if there exist functions $P:\astates\times\Gamma\times\{0,1,\ldots,T-1\} \rightarrow \bR$ and $R:\astates\times\{T\} \rightarrow \bR$ such that for every agent $i\in \aset$, any pair of strategies $\gamma^i$ and $\nu^i$, and any strategy vector $\gamma^{-i} \in \Gamma^{-i}$ such that $(x_k,u_k)\in\mathcal{C}_k$ for all $k\in\{0,1,\ldots,T-1\}$ and $x_{T} \in, \mathcal{C}_{T}$, we have the following for every any initial condition $x_0$:
\vspace{-0.2cm}
\begin{align}\label{eq:potential-condition}
    & J^i(x_0,\gamma^i,\gamma^{-i}) - J^i(x_0,\nu^i,\gamma^{-i}) \nonumber\\ &\quad = J(x_0,\gamma^i,\gamma^{-i}) - J(x_0,\nu^i,\gamma^{-i}),
\end{align}
where $J(x_0,\gamma):= R(x_{T},T) + \sum_{k=0}^{T-1}P(x_k,\gamma(x_0,k),k)$.
\end{definition}

Intuitively, condition~\eqref{eq:potential-condition} implies that a game is a potential game if the change in the running and terminal costs of each agent can be captured by the two functions $R$ and $P$ which are common among all agents. In other words, there exist two functions $P$ and $R$ that can capture the rate of change in each agent's cost when the strategies of all the other agents are fixed.

The following equivalent lemmas from \cite{zazo2016dynamic} give conditions under which a game $\mathcal{G}$ is a constrained dynamic potential game. 

\begin{lemma}\label{lemma-1}
A noncooperative constrained dynamic game $\mathcal{G}$ is a constrained dynamic potential game if there exist functions $P:\astates\times\Gamma\times\{0,1,\ldots,T-1\} \rightarrow \bR$ and $R:\astates\times\{T\} \rightarrow \bR$ such that the following holds for every agent $i \in \aset$ and every time step $k = 0,1,\ldots,T-1$:
\begin{align}
\frac{\partial L^i(x^i_k,u_k,k)}{\partial x^i_k} &= \frac{\partial P(x_k,u_k,k)}{\partial x^i_k} \nonumber \\
\frac{\partial L^i(x^i_k,u_k,k)}{\partial u_k^i} &= \frac{\partial P(x_k,u_k,k)}{\partial u_k^i}, \quad \text{and} \nonumber  \\
\frac{\partial S^i(x^i_{T},T)}{\partial x^i_T} &= \frac{\partial R(x_{T},T)}{\partial x^i_T}.
\end{align}
\end{lemma}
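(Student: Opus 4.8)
The plan is to prove the stated implication directly: assuming the three gradient identities, I would establish the potential identity~\eqref{eq:potential-condition}. Fix an agent $i\in\aset$, a frozen profile $\gamma^{-i}\in\Gamma^{-i}$ for the other agents, and an initial condition $x_0$. Rearranging~\eqref{eq:potential-condition} shows that it is equivalent to proving that the map $\gamma^i \mapsto J^i(x_0,\gamma^i,\gamma^{-i}) - J(x_0,\gamma^i,\gamma^{-i})$ is \emph{independent} of agent $i$'s own strategy; that is, the gap between agent $i$'s cost and the potential $J$ is unchanged as $i$ deviates from $\nu^i$ to $\gamma^i$. Since open-loop strategies are identified with control sequences $\{u^i_k\}_{k=0}^{T-1}$, I would introduce a $C^1$ homotopy $t\mapsto u^i_k(t)$, $t\in[0,1]$, connecting the sequence of $\nu^i$ at $t=0$ to that of $\gamma^i$ at $t=1$ (for instance the straight-line interpolation, which remains admissible when the action space is convex), while keeping $\gamma^{-i}$ fixed.

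Next I would set $\phi(t):=J^i(x_0,\gamma^i(t),\gamma^{-i})$ and $\psi(t):=J(x_0,\gamma^i(t),\gamma^{-i})$ and reduce the claim to showing $\frac{d}{dt}\bigl[\phi(t)-\psi(t)\bigr]=0$ for all $t$, since the fundamental theorem of calculus then gives $\phi(1)-\phi(0)=\psi(1)-\psi(0)$, which is exactly~\eqref{eq:potential-condition}. Applying the chain rule to $\phi$ and $\psi$ produces two kinds of contributions: direct action-sensitivity terms $\frac{\partial(\cdot)}{\partial u^i_k}\frac{du^i_k}{dt}$ (the frozen profile forces $\frac{du^j_k}{dt}=0$ for $j\neq i$, so only the $u^i_k$ term survives) and state-sensitivity terms $\frac{\partial(\cdot)}{\partial x^j_k}\frac{dx^j_k}{dt}$ summed over $k$, with the trajectory $x_k$ the one induced through the dynamics~\eqref{sys_dynamics}. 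The two gradient identities for $L^i$ versus $P$ in $x^i_k$ and $u^i_k$, together with the terminal identity for $S^i$ versus $R$, make the own-action, own-state, and terminal own-state pieces of $\phi'(t)$ and $\psi'(t)$ cancel pairwise.

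What survives after this cancellation are precisely the cross-agent state-sensitivity terms $\sum_{k}\sum_{j\neq i}\frac{\partial P}{\partial x^j_k}\frac{dx^j_k}{dt}$ (and the analogous terminal term with $R$), which have no counterpart in $\phi'(t)$ because $L^i$ and $S^i$ depend only on agent $i$'s own state. The crux of the argument is therefore to show these residual terms vanish, i.e. that $\frac{dx^j_k}{dt}=0$ for every $j\neq i$ and every $k$: perturbing agent $i$'s controls must leave every other agent's state trajectory unchanged. This is where the multi-agent structure enters, as it requires the dynamics to be decoupled across agents, $x^j_{k+1}=f^j(x^j_k,u^j_k,k)$, so that each agent's state is driven by its own control alone — a hypothesis that is consistent with, and indeed forced by, the modeling choice that $L^i$ and $S^i$ depend only on $x^i$. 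I expect this step, identifying and justifying the decoupling that kills the cross-terms, to be the main obstacle and the point at which the assumptions implicit in the cost structure must be stated explicitly; the rest is a routine chain-rule computation combined with the three hypotheses.

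Finally, I would record the technical provisos that the homotopy argument needs: the costs $L^i,S^i$ and the candidate potential functions $P,R$ should be continuously differentiable over the region swept by the path, and the action space should be path-connected (convexity suffices) so that the interpolating path exists and the fundamental theorem of calculus applies. Under these conditions the pointwise equality $\phi'(t)=\psi'(t)$ integrates to the global identity~\eqref{eq:potential-condition}, and since $i$, $\gamma^i$, $\nu^i$, $\gamma^{-i}$, and $x_0$ were arbitrary, $\mathcal{G}$ is a constrained dynamic potential game.
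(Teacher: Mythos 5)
Your route is genuinely different from the paper's: the paper never proves Lemma~\ref{lemma-1} at all (it is quoted from \cite{zazo2016dynamic}), and the closest argument it does contain --- the appendix proof of Theorem~\ref{thm-1} --- proceeds by matching the KKT systems of the game and of the single optimal control problem \eqref{mopc}, not by integrating along a deviation path. Your homotopy argument is the dynamic analogue of the Monderer--Shapley line-integral characterization, and its bookkeeping is sound: with $\phi,\psi$ as you define them, the own-action, own-state, and terminal own-state contributions to $\phi'(t)-\psi'(t)$ cancel pairwise by the three gradient identities, and your technical provisos are the right ones (indeed the path need not remain feasible for $g$, since \eqref{eq:potential-condition} only constrains endpoint values and the costs are continuously differentiable on all of $\astates\times\Gamma$ by Assumption~\ref{assum-1}).

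The genuine gap is exactly the step you flag as the crux, and your resolution of it is not legitimate as stated. To kill the residual terms $\sum_k\sum_{j\neq i}\frac{\partial P}{\partial x^j_k}\frac{dx^j_k}{dt}$ you posit block-decoupled dynamics $x^j_{k+1}=f^j(x^j_k,u^j_k,k)$ and assert this is ``forced by'' the modeling choice that $L^i,S^i$ depend only on own states. That inference is invalid: the lemma is stated over the general coupled dynamics \eqref{sys_dynamics}, and the cost structure places no restriction whatsoever on $f$. Worse, without decoupling the lemma as literally stated is false, so no proof can avoid adding the hypothesis. Concretely, take two scalar agents with $x^1_{k+1}=u^1_k$, $x^2_{k+1}=x^1_k+u^2_k$, $x_0=0$, $T=3$, $L^1=(u^1_k)^2$, $L^2=x^2_k u^2_k$, $S^1=S^2=0$, and $P=L^1+L^2$, $R=0$: all three identities hold (e.g.\ $\partial L^1/\partial x^1_k=0=\partial P/\partial x^1_k$ and $\partial L^2/\partial u^2_k=x^2_k=\partial P/\partial u^2_k$), yet \eqref{eq:potential-condition} applied to both agents forces any potential $J$ to satisfy $J=J^1+c_1(\gamma^2)=J^2+c_2(\gamma^1)$, i.e.\ $J^1-J^2$ must separate across the two agents' control sequences, which the cross term $u^1_0u^2_2$ appearing in $J^2$ (via $x^2_2=u^1_0+u^2_1$) makes impossible. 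So what your argument proves is Lemma~\ref{lemma-1} \emph{plus} a decoupled-dynamics hypothesis. To be fair, you have put your finger on an assumption the paper itself relies on silently --- the appendix writes each agent's stationarity only in the blocks $x^i_k,u^i_k$ and sets $\lambda^i_k=\xi_k$, discarding precisely the cross-blocks your residual terms live in, and every example in the paper (unicycles, integrators) does have per-agent dynamics --- but a correct proof must state the decoupling as an explicit hypothesis (or strengthen the identities to full-state gradients $\partial L^i/\partial x_k=\partial P/\partial x_k$), not derive it from the cost structure.
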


\begin{lemma}\label{lemma-2}
A game $\mathcal{G}$ is a constrained dynamic potential game if the running costs and terminal costs function of every agent $i\in\aset$ can be expressed as the sum of a term that is common to all players plus another term that depends neither on its own action nor on its own state-components:
\begin{align}
    L^i(x^i_k,u^i_k,u^{-i}_k,k) & = P(x_k,u^i_k,u^{-i}_k,k) + \Theta^i(x_k^{-i},u_k^{-i},k) \nonumber \\
    S^i(x^i_{T},T) & = R(x_{T},T) + \Phi^i(x^{-i}_{T},T).
\end{align}
\end{lemma}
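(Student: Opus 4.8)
\emph{Proof sketch.} The plan is to show that the structural decomposition assumed in the statement forces the three pointwise derivative identities of Lemma~\ref{lemma-1}; since those identities are already known to be sufficient for $\mathcal{G}$ to be a constrained dynamic potential game, this yields the claim with no manipulation of full trajectory costs at all. So rather than verifying the defining identity~\eqref{eq:potential-condition} head-on, I would route the argument through Lemma~\ref{lemma-1} and let it carry the dynamic structure.

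First I would fix an agent $i \in \aset$ and a time step $k \in \{0,\ldots,T-1\}$ and substitute the assumed form $L^i(x^i_k,u^i_k,u^{-i}_k,k) = P(x_k,u^i_k,u^{-i}_k,k) + \Theta^i(x^{-i}_k,u^{-i}_k,k)$ into the partial derivatives appearing in Lemma~\ref{lemma-1}. By linearity of differentiation,
\begin{align}
\frac{\partial L^i}{\partial x^i_k} &= \frac{\partial P}{\partial x^i_k} + \frac{\partial \Theta^i}{\partial x^i_k}, \nonumber \\
\frac{\partial L^i}{\partial u^i_k} &= \frac{\partial P}{\partial u^i_k} + \frac{\partial \Theta^i}{\partial u^i_k}. \nonumber
\end{align}
The single observation that drives everything is that $\Theta^i$ is a function of $x^{-i}_k$ and $u^{-i}_k$ only, depending on neither agent $i$'s own state-components $x^i_k$ nor its own action $u^i_k$; hence $\partial \Theta^i/\partial x^i_k$ and $\partial \Theta^i/\partial u^i_k$ vanish identically, and the two running-cost identities of Lemma~\ref{lemma-1} follow. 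Repeating the argument at the terminal stage with $S^i(x^i_T,T) = R(x_T,T) + \Phi^i(x^{-i}_T,T)$, where $\Phi^i$ does not depend on $x^i_T$, gives $\partial S^i/\partial x^i_T = \partial R/\partial x^i_T$. Having matched all three conditions for every $i$ and every $k$, I would invoke Lemma~\ref{lemma-1} to conclude that $\mathcal{G}$ is a constrained dynamic potential game with potential $J$ built from the common terms $P$ and $R$.

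I do not expect the derivative bookkeeping to be the hard part; it is essentially one cancellation repeated three times. The genuine subtlety — and the reason routing through Lemma~\ref{lemma-1} is attractive — surfaces only if one instead tries to verify~\eqref{eq:potential-condition} directly. There one must show that the residual terms $\Phi^i$ and $\sum_k \Theta^i$ cancel in the difference $J^i(x_0,\gamma^i,\gamma^{-i}) - J^i(x_0,\nu^i,\gamma^{-i})$. Since these residuals are evaluated along the opponents' state trajectory $x^{-i}_k$, the cancellation is transparent only once one argues that $x^{-i}_k$ is unaffected by agent $i$'s unilateral deviation from $\gamma^i$ to $\nu^i$: the opponents' actions $u^{-i}_k = \gamma^{-i}(x_0,k)$ are held fixed, and provided each agent's state is driven only by its own controls the residuals take identical values under both strategies and subtract to zero. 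This trajectory-level step is exactly where the fixed-$\gamma^{-i}$, open-loop structure does the real work, and it is the bookkeeping that Lemma~\ref{lemma-1} lets me bypass by localizing everything to pointwise gradients, absorbing any needed assumption on the dynamics into its own statement.
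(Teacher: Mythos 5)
Your proof is correct and takes essentially the same route the paper intends: the paper imports Lemma~\ref{lemma-2} from \cite{zazo2016dynamic} without a separate proof, treating it as equivalent to Lemma~\ref{lemma-1}, and the derivation it presupposes is exactly yours --- since $\Theta^i$ and $\Phi^i$ carry no dependence on agent $i$'s own state or action, their partial derivatives with respect to $x^i_k$, $u^i_k$, and $x^i_T$ vanish identically, so the decomposition immediately yields the three derivative identities of Lemma~\ref{lemma-1}. Your closing observation --- that verifying the potential condition~\eqref{eq:potential-condition} directly would require the opponents' trajectories $x^{-i}_k$ to be unaffected by agent $i$'s unilateral deviation, which is delicate under the paper's general coupled dynamics $x_{k+1} = f(x_k,u_k,k)$ --- is apt, and is consistent with why the paper's appendix argues at the level of KKT conditions rather than trajectory-cost differences.
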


We now rewrite Theorem 1 from \cite{zazo2016dynamic} to discuss equilibria of dynamic potential games. We make the following assumptions:
\begin{assumption}\label{assum-1}
Agents' running and terminal costs $L^i$ and $S^i$ are continuously differentiable on $\astates\times\Gamma$ and $\astates$.
\end{assumption}
\begin{assumption}\label{assum-3}
The systems dynamics $f$ and constraints $g$ are continuously differentiable in $\astates\times\actionset$ and satisfy some regularity conditions (such as Slater's, the linear independence of gradients, or the Mangasarian-Fromovitz constraint qualification).
\end{assumption}
Under these assumptions, we have the following result:
\begin{theorem}\label{thm-1}
Suppose that $\mathcal{G}$ is a constrained dynamic potential game, and in addition, Assumptions \ref{assum-1}-\ref{assum-3} hold. Then, a solution to the following multivariate optimal control problem:
\vspace{-0.25cm}
\begin{align}\label{mopc}
      \underset{\gamma \in \Gamma}{\text{minimize}} \quad & R(x_{T},T) + \sum_{k=0}^{T-1}P(x_k,\gamma(x_0,k),k) \nonumber\\
     \text{subject to} \quad& x_{k+1} = f(x_k,u_k,k), \; x_0 \; \text{given} \nonumber \\
    & g(x_k,u_k,k) \leq 0,
\end{align} 
\vspace{-0.55cm}

\noindent is a Generalized Nash equilibrium for $\mathcal{G}$.
\end{theorem}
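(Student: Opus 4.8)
The plan is to show directly that a (global) minimizer $\gamma^*$ of the potential optimal control problem~\eqref{mopc} satisfies the defining inequalities of an open-loop generalized Nash equilibrium in Definition~\ref{gen_NE}. First I would note that, because the dynamics $f$ and the shared constraints $g$ appearing in~\eqref{mopc} are precisely the ones that define the feasible sets $\mathcal{C}_k$, the feasible region of~\eqref{mopc} coincides with the set of jointly feasible strategy profiles of $\mathcal{G}$. Consequently, if I fix an agent $i$ at the opponents' equilibrium strategies ${\gamma^{-i}}^*$, then every admissible unilateral deviation $\nu^i$ of agent $i$ -- that is, every $\nu^i$ for which $(x_k,\nu^i(x_0,k),{\gamma^{-i}}^*(x_0,k))\in\mathcal{C}_k$ for all $k$ and $x_{T}\in\mathcal{C}_{T}$ -- yields a profile $(\nu^i,{\gamma^{-i}}^*)$ that is also feasible for~\eqref{mopc}.

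Next I would invoke the potential property~\eqref{eq:potential-condition}. Specializing it with $\gamma^i={\gamma^i}^*$, comparing against an arbitrary feasible deviation $\nu^i$, and holding the rest at ${\gamma^{-i}}^*$, it gives $J^i(x_0,{\gamma^i}^*,{\gamma^{-i}}^*)-J^i(x_0,\nu^i,{\gamma^{-i}}^*)=J(x_0,{\gamma^i}^*,{\gamma^{-i}}^*)-J(x_0,\nu^i,{\gamma^{-i}}^*)$. Since $\gamma^*$ minimizes the potential $J$ over the feasible region and $(\nu^i,{\gamma^{-i}}^*)$ is feasible, the right-hand side is nonpositive, hence so is the left-hand side, i.e. $J^i(x_0,\gamma^*)\leq J^i(x_0,\nu^i,{\gamma^{-i}}^*)$. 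As $i$ and the feasible deviation $\nu^i$ were arbitrary, no agent can lower its own cost by a feasible unilateral deviation, which is exactly~\eqref{eq:Nash-definition}; therefore $\gamma^*$ is a generalized Nash equilibrium.

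To make the argument operate at the level of the stationary points that a solver such as ALTRO actually returns, I would supply the corresponding first-order version. Under Assumptions~\ref{assum-1}--\ref{assum-3} I would write the KKT system of~\eqref{mopc} -- adjoining the dynamics with costates and the constraints $g\leq0$ with multipliers $\lambda_k\geq0$ -- alongside the KKT system of each agent's individual problem of minimizing $J^i$ against ${\gamma^{-i}}^*$. The gradient identities of Lemma~\ref{lemma-1}, namely $\partial L^i/\partial u_k^i=\partial P/\partial u_k^i$, $\partial L^i/\partial x_k^i=\partial P/\partial x_k^i$, and $\partial S^i/\partial x_{T}^i=\partial R/\partial x_{T}^i$, ensure that the stationarity conditions of the potential problem in the variables $u_k^i$ coincide with those of agent $i$'s problem, while the constraint qualification in Assumption~\ref{assum-3} guarantees that the same multipliers $\lambda_k$ serve both, so that primal/dual feasibility and complementarity transfer unchanged.

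The main obstacle I anticipate is exactly this bookkeeping of feasible sets and multipliers: I must confirm that a deviation is scored against the \emph{same} constraint set $\mathcal{C}_k$ in the game and in the single problem, which hinges on the standing assumption that coupling constraints are shared across agents -- were the constraints private, a deviation feasible for the game need not be feasible for~\eqref{mopc} and the minimality comparison would collapse. A secondary technical point in the first-order version is aligning the per-agent costate recursions, since matching agent $i$'s adjoint equation to the potential's requires the potential to reproduce agent $i$'s own running-cost gradient $\partial L^i/\partial x_k^i$; this is again furnished by Lemma~\ref{lemma-1}, and verifying that the recursions agree for every $k$ is the careful but routine remainder of the proof.
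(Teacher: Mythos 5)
Your proposal is correct, and its main argument takes a genuinely different route from the paper's own proof. The paper proves Theorem~\ref{thm-1} by writing out the full KKT systems---one constrained problem per agent for the game, and one for the single problem~\eqref{mopc}---and matching them term by term: the gradient identities of Lemma~\ref{lemma-1} align the stationarity conditions, and the multipliers are identified via $\lambda^i_k=\xi_k$, $\mu^i_k=\delta_k$. Your ``first-order version'' in the third paragraph is essentially this proof. Your primary argument, however, is the zeroth-order comparison: because the coupling constraints are shared, the feasible set of~\eqref{mopc} coincides with the game's joint feasible set, so a global minimizer $\gamma^*$ beats every feasible unilateral deviation $(\nu^i,{\gamma^{-i}}^*)$ in potential value, and the identity~\eqref{eq:potential-condition} transfers that inequality to $J^i$, yielding the Nash condition of Definition~\ref{gen_NE} directly. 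This buys something real: it requires neither differentiability (Assumption~\ref{assum-1}) nor any constraint qualification (Assumption~\ref{assum-3}), and it certifies genuine agent-level optimality, whereas the paper's KKT-matching argument, absent convexity assumptions, strictly establishes only that a solution of~\eqref{mopc} satisfies each agent's first-order necessary conditions---a conclusion better suited to the stationary points that a solver such as ALTRO actually returns, which is the practically relevant object in the paper. Your anticipated obstacle about private versus shared constraints is exactly the hypothesis the paper flags in the remark following Theorem~\ref{thm-1}, so your bookkeeping concern is well placed and correctly resolved.
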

\begin{proof}
Refer to the appendix.
\end{proof}

It should be noted that conditions in Lemma \ref{lemma-1} and Lemma \ref{lemma-2} do not require any conditions on agents' constraints because of our assumption that coupling constraints are shared between agents and agents also have knowledge of the individual constraints of the others.



\vspace{-0.2cm}

\section{Constrained Interactive Trajectory Optimization}


\begin{figure*}[th]
    \centering
    \includegraphics[scale = 0.67]{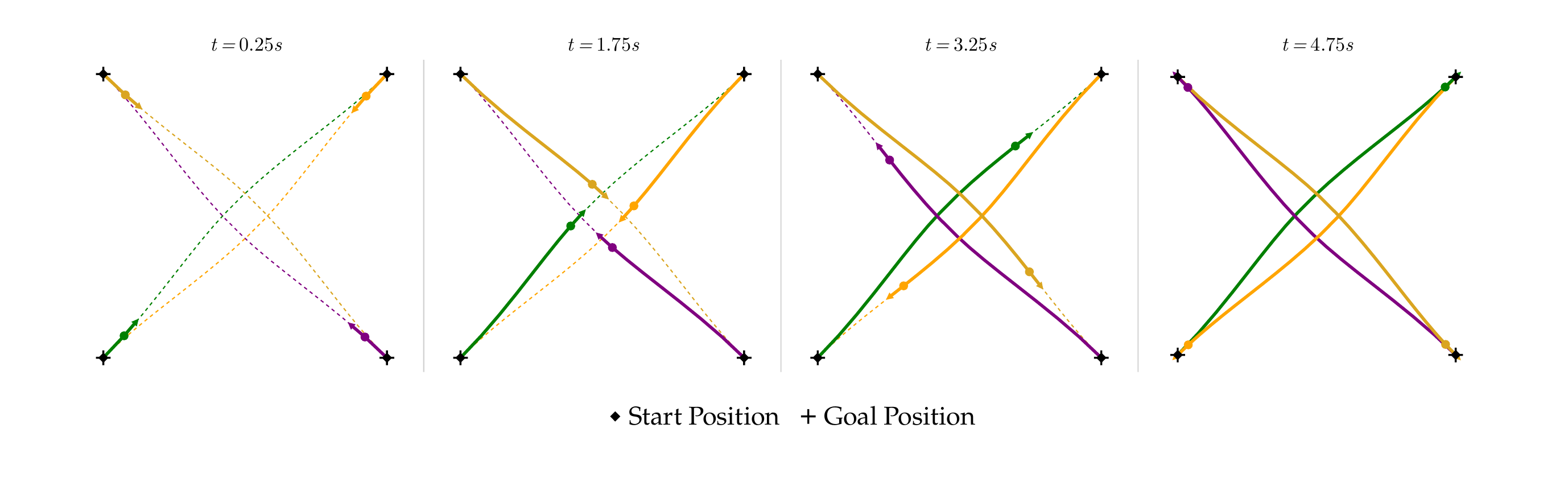}
    \vspace{-1.2cm}
    \caption{The snapshots of the trajectories found by our algorithm when four unicycle agents interact with each other and exchange their positions diagonally over a time interval of 5\emph{s}. Agents move from their start positions to their respective goal positions. In order to avoid collision with other agents, each agent deviates from its nominal trajectory and successfully reaches its goal position.}
    \label{fig:four_agents}
    \vspace{-0.7cm}
\end{figure*}

In this section, we identify the special structure of agents' cost functions under which the constrained equilibria of the underlying dynamic game can be found by solving a single constrained optimal control problem. Agents' decisions are often constrained through their state constraints such as avoiding collision with some stationary or moving obstacle in the environment and avoiding collision with other agents. On the other hand, each agent has an individual goal and task which has to be completed within the given time horizon, i.e. there is also a goal constraint which normally depends only on the states and actions of the agent itself. In such problems, we show that interactions are constrained potential games. We provide the following theorem which characterizes the cost and constraint structures under which the game is a constrained dynamic potential game.

\begin{theorem}\label{thm-2}
If each agent's cost function depends only on its own states and control inputs, i.e. if $J^i(\cdot)$ is of the form,
\vspace{-3mm}
\begin{equation}\label{cost_structure}
    J^i(x_0,\gamma) = S^i(x^i_{T},T) + \sum_{k=0}^{T-1}L^i(x^i_k,u^i_k,k),
\end{equation}

\noindent with system dynamics~\eqref{sys_dynamics} and constraints~\eqref{constraints}, the game $\mathcal{G} = \left( \aset, \{\Gamma_i\}_{i\in\aset}, \{J_i\}_{i\in\aset},\{\mathcal{C}_k\}_{k\in\{0,\ldots,T\}},f \right)$ is a constrained dynamic potential game with potential functions
\vspace{-0.2cm}
\begin{align}\label{potential_fun}
    P(x_k,u_k,k) & = \sum_{i\in\aset}L^i(x_k^i,u^i_k,k), k \in \{0,1,\ldots,T-1\}\nonumber \\
    R(x_{T},T) &= \sum_{i\in\aset}S^i(x^i_{T},T),
\end{align}
\vspace{-0.3cm}

\noindent and a generalized Nash equilibrium of the game can be computed by solving the following:
\begin{align}\label{mopc-thm}
      \underset{\gamma \in \Gamma}{\text{minimize}} \quad & R(x_{T},T) + \sum_{k=0}^{T-1}P(x_k,\gamma(x_0,k),k) \nonumber\\
     \text{subject to} \quad& x_{k+1} = f(x_k,u_k,k), \; x_0 \; \text{given} \nonumber \\
    & g(x_k,u_k,k) \leq 0.
\end{align}
\end{theorem}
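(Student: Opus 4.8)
The plan is to prove the statement in two stages: first verify that the proposed pair $(P,R)$ from~\eqref{potential_fun} turns $\mathcal{G}$ into a constrained dynamic potential game, and then invoke Theorem~\ref{thm-1} to conclude that any minimizer of the single optimal control problem~\eqref{mopc-thm} is a Generalized Nash equilibrium. The second stage is essentially automatic once the first is established, because~\eqref{mopc-thm} is literally the problem~\eqref{mopc} with $P$ and $R$ instantiated as in~\eqref{potential_fun}, and Assumptions~\ref{assum-1}--\ref{assum-3} are in force by hypothesis.

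For the first stage, the cleanest route is to apply Lemma~\ref{lemma-2}, i.e. to exhibit the required additive decomposition of each agent's running and terminal cost. Taking $P(x_k,u_k,k) = \sum_{i\in\aset} L^i(x^i_k,u^i_k,k)$ and $R(x_T,T) = \sum_{i\in\aset} S^i(x^i_T,T)$ as in~\eqref{potential_fun}, I would write for each agent $i$
\[
L^i(x^i_k,u^i_k,k) = P(x_k,u_k,k) + \Theta^i(x^{-i}_k,u^{-i}_k,k), \quad \Theta^i := -\sum_{j\neq i} L^j(x^j_k,u^j_k,k),
\]
and likewise $S^i = R + \Phi^i$ with $\Phi^i := -\sum_{j\neq i} S^j(x^j_T,T)$. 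The key observation is that, because the hypothesis~\eqref{cost_structure} forces each summand $L^j$ (resp. $S^j$) to depend only on agent $j$'s own state and action, the remainder $\Theta^i$ (resp. $\Phi^i$) depends neither on $x^i_k,u^i_k$ (resp. $x^i_T$) --- exactly the structural requirement of Lemma~\ref{lemma-2}. Hence $\mathcal{G}$ is a constrained dynamic potential game with potential $J(x_0,\gamma) = R(x_T,T) + \sum_k P(x_k,\gamma,k)$.

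As an equivalent check one could instead verify the gradient conditions of Lemma~\ref{lemma-1}: since only the $i$-th summand of $P = \sum_j L^j$ depends on $x^i_k$ and on $u^i_k$, we obtain $\partial P/\partial x^i_k = \partial L^i/\partial x^i_k$ and $\partial P/\partial u^i_k = \partial L^i/\partial u^i_k$, and similarly $\partial R/\partial x^i_T = \partial S^i/\partial x^i_T$. Either lemma yields the same conclusion; note that, as remarked after Theorem~\ref{thm-1}, no condition on the constraint functions $g$ is needed here because the coupling constraints are shared across agents.

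The only place that genuinely requires care is confirming that the remainder terms are inert under unilateral deviations, which is precisely what makes the potential condition~\eqref{eq:potential-condition} hold: when agent $i$ switches from $\gamma^i$ to $\nu^i$ with $\gamma^{-i}$ held fixed, the quantities $x^{-i}_k,u^{-i}_k$ entering $\Theta^i$ and $\Phi^i$ must remain unchanged, so that $\Theta^i$ and $\Phi^i$ cancel in the cost difference and leave exactly the change in the common potential $J$. This is where the separable cost structure~\eqref{cost_structure} --- each $L^i,S^i$ seeing only its own $x^i,u^i$ --- does the real work. With that in hand, the remaining task of translating the coupled Nash conditions into a single optimization is discharged by directly appealing to Theorem~\ref{thm-1}, whose regularity hypotheses are supplied by Assumptions~\ref{assum-1}--\ref{assum-3}.
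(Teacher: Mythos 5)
Your proposal is correct and follows essentially the same route as the paper: the same decomposition $L^i = P + \Theta^i$, $S^i = R + \Phi^i$ with remainders independent of agent $i$'s own state and action, verification via Lemma~\ref{lemma-2}, and then a direct appeal to Theorem~\ref{thm-1} under Assumptions~\ref{assum-1}--\ref{assum-3}. If anything, your write-up is slightly cleaner than the paper's, since you state $\Theta^i = -\sum_{j\neq i} L^j$ with the correct sign (the paper drops the minus sign when naming $\Theta^i$) and you explicitly flag why $\Theta^i,\Phi^i$ are inert under unilateral deviations, which is the heart of the potential condition~\eqref{eq:potential-condition}.
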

\begin{proof}
It is straightforward to see that the running cost of each agent $i\in\aset$ can be written as
\begin{align}
    L^i(x_k^i,u^i_k,k) & = \sum_{i\in \aset}L^i(x_k^i,u^i_k,k) - \sum_{j\neq i, j\in \aset}L^j(x_k^j,u^j_k,k) \nonumber \\
    &= P(x_k,u_k,k) + \Theta^i(x^{-i}_k,u^{-i}_k,k),
\end{align}
where $P(x_k,u_k,k)$ is the term common to all the agents and $\Theta(x^{-i}_k,u^{-i}_k,k) = \sum_{j\neq i, j\in \aset}L^j(x_k^j,u^j_k,k)$ is the term that does not depend on the action, nor states of agent $i$. Similarly, for the terminal costs, we can write
\begin{align}
    S^i(x_{T}^i,T) & = \sum_{i\in \aset}S^i(x_{T}^i,T) - \sum_{j\neq i, j\in \aset}S^j(x_{T}^j,T) \nonumber \\
    &= R(x_{T},T) + \Phi^i(x_{T}^{-i},T),
\end{align}
where $R(x_{T},T)$ is the term common to all the agents and $\Phi^i(x_{T}^{-i},T) = \sum_{j\neq i, j\in \aset}S^j(x_{T}^j,T)$ is the term that does not depend on the state components of agent $i$. Therefore, using Lemma~\ref{lemma-2}, it is easy to see that the game $\mathcal{G}$ is a constrained dynamic potential game with the potential function \eqref{potential_fun}. Furthermore, assuming that Assumptions~\ref{assum-1}-\ref{assum-3} hold, applying Theorem~\ref{thm-1}, it can be concluded that a generalized Nash equilibrium of the game $\mathcal{G}$ can be computed by solving constrained optimal control problem \eqref{mopc-thm}.
\end{proof}

Note that~\eqref{cost_structure} captures all scenarios where agents have individual costs such as tracking costs, goal-reaching costs, and energy consumption costs subject to both joint and individual constraints. For example, an agent may have individual constraints such as bounds on control inputs while sharing a set of coupled constraints with all other agents such as pairwise collision avoidance with other agents. We believe that such a structure captures a wide range of interactions.


If a game $\mathcal{G} = \left( \aset, \{\Gamma_i\}_{i\in\aset}, \{J_i\}_{i\in\aset},\{\mathcal{C}_k\}_{k\in\{0,\ldots,T\}},f \right)$ satisfies~\eqref{cost_structure}, then using Theorem~\ref{thm-2}, to find generalized Nash equilibria, it suffices to solve~\eqref{mopc-thm}, which is a single-agent constrained optimal control problem. In general, any trajectory planner capable of maintaining constraints can be used for solving~\eqref{mopc-thm}. In this paper, we use the Augmented Lagrangian Trajectory Optimization (ALTRO) algorithm~\cite{howell2019altro} for solving~\eqref{mopc-thm}. ALTRO combines iterative-LQR (iLQR)~\cite{li2004iterative} with an augmented Lagrangian method to handle general state and input constraints and an active-set projection method for final “solution polishing” to achieve fast and robust convergence.

ALTRO computes an augmented Lagrangian $\mathcal{L}(X,\gamma,\lambda,\mu)$ that consists of the Lagrangian of the optimal control problem with extra penalties added for constraint violation. Let $X$ be the concatenated vector of $\{x_k\}_{k\in\{0,1,\ldots,N\}}$, $\lambda$ be the vector containing the Lagrange multipliers and $\mu$ be the penalties for constraints. ALTRO minimizes the augmented Lagrangian using iLQR. Then, the outer loop updates the values of Lagrange multipliers and penalty variables. This is repeated until the maximum constraint violation goes below the tolerance values. Then, ALTRO uses an active set projection method to project previously obtained coarse solutions onto the manifold associated with the active constraints. A brief description of ALTRO is presented in Algorithm~\ref{alg:altro}.

\begin{algorithm}
\caption{ALTRO}\label{alg:altro}
\begin{algorithmic}
\Procedure{}{}
    \State Initialize $x_0$, $\gamma$, tolerances
    \State Compute $X := \{x_k\}_{k\in\{0,\ldots,T\}}$ from $x_0$ and $\gamma$
    \State Initialize $\lambda,\mu,\phi$
    \While{$\max$(\text{constraint violation}) $>$ tolerance}
    \State Minimize the augmented Lagrangian $\mathcal{L}(X,\gamma,\lambda,\mu)$ 
    \State use iLQR and compute updated $X,\gamma$.
    \State Update the $\lambda$ and $\mu$.
    \EndWhile
    \State Use active-set projection to obtain updated $(X,\gamma)$
    
    \State \Return $(X,\gamma)$
\EndProcedure
\end{algorithmic}
\end{algorithm}

\vspace{-0.3cm}


\section{Numerical Simulation}

In this section, we consider a planar navigation setting and compare the performance of our method with the state-of-the-art game solvers. We consider four agents sitting on the corners of a square of length 3m such that their goal positions are the opposite diagonal ends of the square (see Fig.~\ref{fig:four_agents}). Each agent wants to reach its goal position while avoiding collisions with others. We consider the following discrete-time unicycle dynamics to model each agent $i$:
\begin{align}\label{unicycle}
    p^i_{k+1} & = p^i_k + h\cdot v^i_k\cos{\theta^i_k}, \;
    q^i_{k+1} = q^i_k + h\cdot v^i_k\sin{\theta^i_k} \nonumber \\
    \theta^{i}_{k+1} & = \theta^i_k + h\cdot\omega_i,
\end{align}
where $p^i_k$ and $q^i_k$ are $x$ and $y$ coordinates of the positions in the 2D plane, $\theta^i_k$ is the heading angle from positive x-axis, $v^i_k$ is the forward velocity, and $\omega^i_k$ is the angular velocity of the agent $i$ at time step $k$. For each agent $i$, we let the state vector $x^i_k$ be  $[p^i_k,q^i_k,\theta^i_k]$, and the action vector $u^i_k$ be $[v^i_k,\omega^i_k]$.
Suppose the desired position of each agent $i$ is $x^i_f$. We consider the cost function of each agent $i$ to be \vspace{-0.17cm}
\begin{align}
    & J^i(x_0,\gamma) =  \frac{1}{2}\left(x^i_{T} - x_f^i\right)^\top Q_f^i\left(x^i_{T} - x_f^i\right) \nonumber\\ & + \sum_{k=0}^{T - 1}\frac{1}{2}\left(x^i_{k} - x^i_f\right)^\top Q^i\left(x^i_{k} - x^i_f\right) + \frac{1}{2}{u^i}_{k}^\top C^iu^i_{k},
\end{align}
where $Q^i$ and $C^i$ are diagonal matrices that determine the cost of deviation from the reference position and control costs respectively. The matrix $Q_f^i$ determines the cost of deviating from the desired position at the final time instance. To enforce collision avoidance among the agents, we consider a minimum separation distance between the agents. We require the distance between any pair of agents to be greater than or equal to some threshold distance $d_{\text{collision}}$. Therefore, we impose collision avoidance constraints of the following form between any pair of agents
\vspace{-0.15cm}
\begin{align}\label{collision_constraint}
     g_{ij}(x^i_k,x^j_k,k):= -d(x^i_k,x^j_k) + d_{\text{collision}} \leq 0,
\end{align}

\noindent where $d(x^i_k,y^i_k)$ is the distance between agents $i$ and $j$ at time step $k$. We consider the value of $d_{\text{collision}}$ to be 0.3\emph{m}. 
Furthermore, we consider additional constraints for bounding the actions of every each agent:
\begin{align}\label{bound_constraint}
    g_i(u^i_k,k) := |u^i_k| - u_{\text{bound}} \leq 0, 
\end{align}
\noindent where $u_{\text{bound}}$ is the bound on actions. We consider the bound $u_{\text{bound}}$ to be 3 m/s for the linear velocity and 3 rad/s for the angular velocity. All these constraints can be concatenated to obtain the constraint vector $g(x_k,u_k,k)$. We now apply the results from Theorem \ref{thm-2} to obtain optimal trajectories.
\begin{figure}
    \centering
    \includegraphics[scale=0.4]{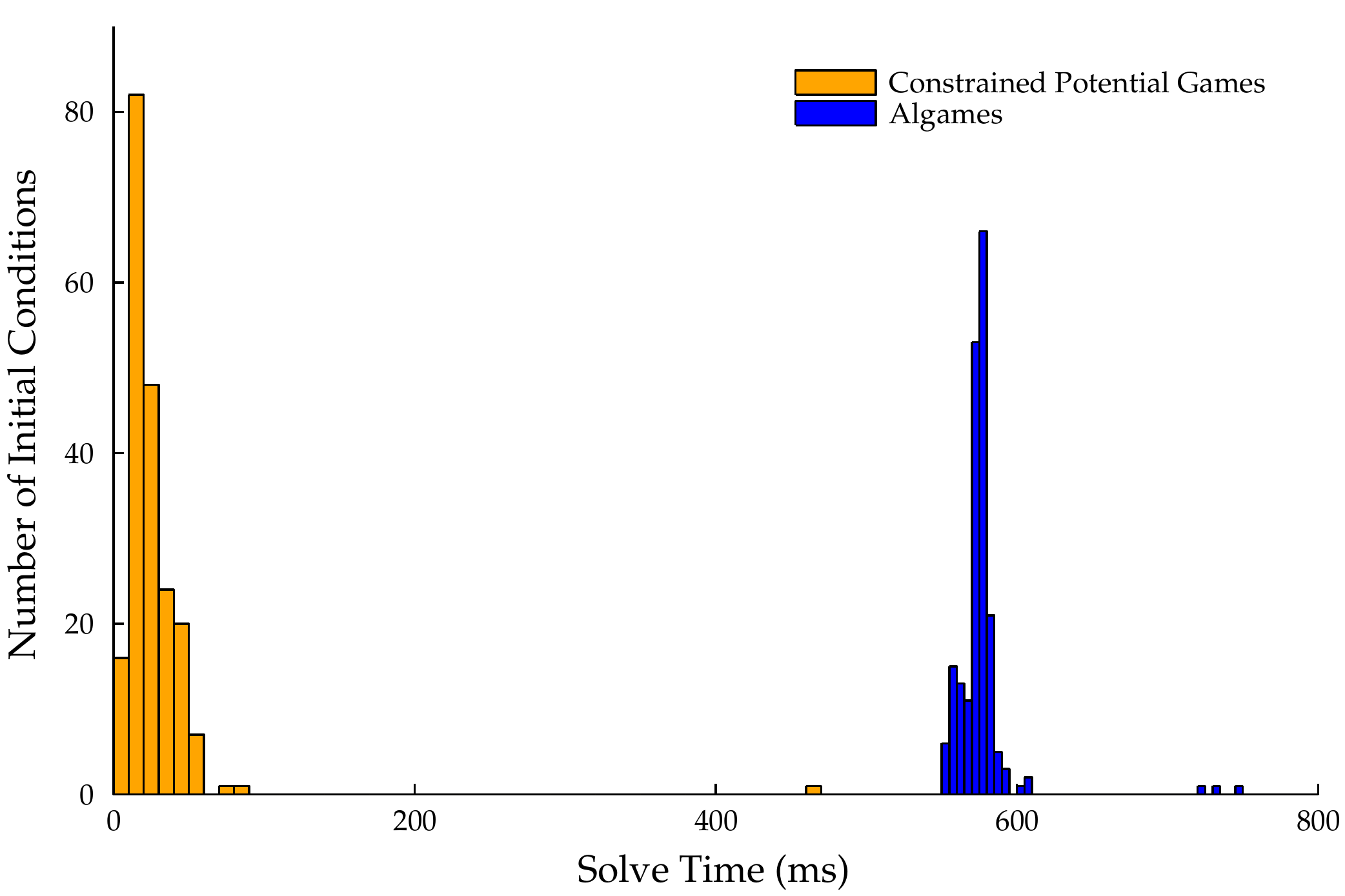}
    \caption{Histograms for the comparison of the solve time of Constrained Potential Games with Algames. Constrained Potential Games has an average solve time of 26.15 \emph{ms} in contrast to Algames which has an average solve time of 577.35 \emph{ms}. Our approach is approximately 20 times faster than Algames.}
    \label{fig:comparison}
    \vspace{-0.7cm}
\end{figure} 
The simulation of the resulting trajectories is demonstrated in~Fig.~\ref{fig:four_agents} where the agents manage to successfully avoid collisions with one another while reaching their designated goal locations. They exhibit intuitive trajectories where they yield and coordinate their motions to avoid collisions. Similar trajectories can be generated for various initial conditions. We further compare the solve time of our algorithm with the state-of-the-art constrained game solver Algames~\cite{cleac2019algames}. We generate 200 random sets of initial conditions for the four agents, and for each set of initial conditions, we run both our algorithm and Algames. We measure the solve time of both methods. The histograms of the solve time for the two methods are plotted in Fig.~\ref{fig:comparison}. The solve time of our method is on average $26.15 \pm 34.30$ \emph{ms} while the average solve time of Algames was $577.35 \pm 27.70$ \emph{ms}. As Fig.~\ref{fig:comparison} shows, our method is significantly faster than Algames, and for most of the initial conditions provides a solution in 20 \emph{ms} while for Algames, the solve time is always greater than 500 \emph{ms}. Our method is on average 20 times faster than Algames. 
\section{Experiments}
 \vspace{0cm}

To demonstrate the performance of our proposed approach, we considered a hardware experiment where two Crazyflie quadrotors are navigating in a room shared with two humans. To enforce complicated task constraints, we require the quadrotors to transport a rigid rod with length $0.5$\emph{m}. This implies that the quadrotors are required to maintain a fixed given distance from each other at all time steps, i.e. they must satisfy an equality constraint at all times. We further assume that agents are subject to collision avoidance constraints. This implies that quadrotors need to satisfy both equality and inequality constraints at each time step. The humans and quadrotors have designated initial and goal positions. The humans are asked to walk to their destinations, and the quadrotors need to carry the rod to the designated location while avoiding collisions with the humans.
The quadrotors use our trajectory optimization algorithm to move from their initial positions to goal positions. 
The state vector of each quadrotor $i \in \{1, 2\}$, at each time step $k$, is:
\vspace{-0.2cm}
\begin{align*}
    x^i_k &= [p^i_{x, k}, ~p^i_{y, k}, ~p^i_{z, k}, ~\phi^i_{k}, ~\theta^i_k, ~\psi^i_k],
\end{align*}
\vspace{-0.5cm}

\noindent which consists of its position ($[p^i_{x, k}, ~p^i_{y, k}, ~p^i_{z, k}]$) and orientation ($[\phi^i_{k}, ~\theta^i_k, ~\psi^i_k]$). We model each quadrotor as a 6 DOF integrator, i.e., each quadrotor dynamics are given by $\dot{x}^i_k = u^i_k$, where $u^i_k = [u^i_{1, k}, ~u^i_{2, k}, ~u^i_{3, k}, ~u^i_{4, k}, ~u^i_{5, k}, ~u^i_{6, k}]$ is the control input. The first three entries of $u^i_k$ correspond to the linear velocities, and the last three entries correspond to angular velocities. This design choice is made based on the fact that we send waypoints as commands to Crazyflies through the Crazyswarm package \cite{crazyswarm}, and they track those waypoints with an in-place low-level controller.
Similar to~\cite{fridovich2020efficient}, we model the humans via unicycle dynamics, except for the fact that there is a constant height $r^i$ associated with each human $i, i \in \{1, 2\}$. We consider the human state vector to be
\vspace{-0.4cm}
\begin{align*}
x^i_k = [p^i_{x, k}, ~p^i_{y, k}, ~r^i, ~\theta^i_{k}] 
\end{align*}
\vspace{-0.6cm}

\noindent where $p^i_{x, k}, ~p^i_{y, k}$ denote the $x, y$ coordinates of human $i$ and $\theta^i_{k}$ denote the orientation of human $i$ (yaw), all at time step $k$.
We assume the $z$-coordinate of the human is located midway from the ground, i.e. $r^i \equiv 1$. Our assumption is that humans are rational and while planning their motion in the environment, they also yield to each other and drones by satisfying constraints. We consider the collision avoidance constraint between the two humans to be the same as~\eqref{collision_constraint}. For the collision avoidance constraint between each quadrotor and each human, we assume that each human occupies a cylinder of height 2m with radius $\sqrt{0.4}\, m$, centered at $(p^i_{x, k}, ~p^i_{y, k}, ~r^i)$. 
The maximum linear velocity for the quadrotors is $1.2$m/s, and we assume that humans have a maximum linear velocity of $1.5$m/s.\\
We run the algorithm in a receding horizon fashion.
We set the time horizon to be 0.5s with a step size of 0.1s. 
We solve the underlying optimization problem with do-mpc
\cite{LUCIA201751}, which models the problem symbolically with CasADi \cite{Andersson2019} and solves them with IPOPT \cite{Wachter:2006wt}.
\\ The resulting trajectories of our experiment are plotted in Fig \ref{fig:hardware_experiment}. 
As shown in the figure, when the two quadrotors are relatively far from the two humans, they move straight toward their designated positions. When there are humans nearby, the quadrotors are able to simultaneously change their orientations and the orientation of the rod to avoid collisions with humans while carrying the rod and maintaining the equality constraints.

\section{Conclusion and Future Work}

In this work, we developed an efficient and reliable algorithm for planning constrained multi-agent interactive trajectories. We identified the cost structures under which game-theoretic interactions become an instance of a constrained dynamic potential game which are games in which a generalized Nash equilibrium can be found by solving a single constrained optimal control problem. We showed that this simplification provides a simple and efficient method for interactive trajectory planning. We demonstrated that in a navigation setup involving four planar agents, our method is on average 20 times faster than the state-of-the-art. We further verified our method in an experiment involving two quadrotors carrying a rigid object around two humans. 

In this work, we considered open-loop Nash equilibria and ran open-loop equilibrium strategies in a receding horizon fashion to approximate feedback policies. For our future work, we aim to examine generalized feedback Nash equilibria. We would like to investigate whether dynamic potential games can be used for efficiently generating generalized feedback Nash strategies.

\vspace{-0.4cm}
\appendix
\vspace{-1mm}\begin{proof}
This theorem was proven in \cite{zazo2016dynamic} for the infinite horizon case when $T = \infty$. Here, we state the proof for the finite horizon case. The Lagrangian for each agent $i \in \aset$ can be written as 
\begin{align}
    & \mathcal{L}(x_k,u_k,\lambda^i_k,\mu^i_k) = S^i(x^i_{T},T) + {\mu^{i^\top}_{T}} g(x_{T},T)\nonumber \\
    & \qquad + \sum_{k=0}^{T-1}\bigg( L^i(x^i_k,u_k,k) + {\lambda^{i}_k}^\top \left( f(x_k,u_k,k) - x_{k+1} \right) \bigg. \nonumber \\
    & \qquad + \bigg. {\mu^{i^\top}_t} g(x_k,u_k,k)\bigg),\end{align}
\vspace{-0.5cm}

\noindent where $\lambda^i_k := (\lambda^{ij}_k)_{j=0}^{n}$ is the corresponding vector of Lagrange multipliers with $n$ being the state dimension of the game, and $\mu_k^i:=(\mu^{il}_k)_{l=0}^c$ with $c$ being number of constraints. In order to obtain the generalized Nash equilibria of the game, the KKT conditions can be written as the following for every agent $i\in\aset$:
\begin{flalign}
    & \frac{\partial L^i(x^i_k,u_k,k)}{\partial x^i_k} + {\lambda^i_k}^\top\frac{\partial f(x_k,u_k,k)}{\partial x^i_k} && \nonumber \\
    & \qquad + {\mu^i_k}^\top\frac{\partial g(x_k,u_k,k)}{\partial x^i_k} - \lambda^{i}_{k-1} = 0, &&
\end{flalign}
\vspace{-0.3cm}
\begin{flalign}
    & \frac{\partial L^i(x^i_k,u_k,k)}{\partial u^{i}_k} + {\lambda^i_k}^\top\frac{\partial f(x_k,u_k,k)}{\partial u^i_k} && \nonumber \\
    & \qquad + {\mu^i_k}^\top\frac{\partial g(x_k,u_k,k)}{\partial u^i_k} = 0, &&
\end{flalign}
\vspace{-0.3cm}
\begin{flalign}
    & x_{k+1} = f(x_k,u_k,k), \; g(x_k,u_k,k) \leq 0, \; \forall \; 0\leq k \leq T-1, &&
    \nonumber\\
    & \frac{\partial S^i(x^i_{T},T)}{\partial x^i_{T}} + {\mu^i_{T}}^\top\frac{\partial g(x_{T},T)}{\partial x^i_{T}} = 0, && \\
    & g(x_{T},T) \leq 0, && \\
    & \mu^i_k \leq 0, \;\; {\mu^i_k}^\top g(x_k,u_k,k) = 0, \; \forall \; 0\leq k \leq T. &&
\end{flalign}
\vspace{-0.6cm}

\noindent The Lagrangian for the multivariate constrained optimal control problem~\eqref{mopc} is
\vspace{-0.2cm}
\begin{align}
    & \mathcal{L}^P(x_k,u_k,\xi_k,\delta_k) = R(x_{T},T) + {\delta_{T}}^\top g(x_{T},T)\nonumber \\
    & \qquad + \sum_{k=0}^{T-1}\bigg( P(x_k,u_k,k) + {\xi_k}^\top \left( f(x_k,u_k,k) - x_{k+1} \right) \bigg. \nonumber \\
    & \qquad + \bigg. {\delta_t}^\top g(x_k,u_k,k)\bigg),
\end{align}
\vspace{-0.6cm}

\noindent where $\xi_k := (\xi^{j}_k)_{j=0}^{n}$ and $\delta_k:=(\delta^{l}_k)_{l=0}^c$ are the corresponding vectors of multipliers. Therefore, the KKT conditions of~\eqref{mopc} are
\vspace{-0.4cm}

\begin{flalign}
    & \frac{\partial P(x_k,u_k,k)}{\partial x^i_k} + {\xi_k}^\top\frac{\partial f(x_k,u_k,k)}{\partial x^i_k} && \nonumber \\
    & \qquad + {\delta_k}^\top\frac{\partial g(x_k,u_k,k)}{\partial x^i_k} - \xi_{k-1} = 0, &&
\end{flalign}
\vspace{-0.3cm}
\begin{flalign}
    & \frac{\partial P(x_k,u_k,k)}{\partial u^{i}_k} + {\xi_k}^\top\frac{\partial f(x_k,u_k,k)}{\partial u^i_k} && \nonumber \\
    & \qquad + {\delta_k}^\top\frac{\partial g(x_k,u_k,k)}{\partial u^i_k} = 0, &&
\end{flalign}
\vspace{-0.4cm}
 \begin{flalign}
    & x_{k+1} = f(x_k,u_k,k), \; g(x_k,u_k,k) \leq 0, \; \forall \;0\leq k \leq T-1 && \nonumber\\
    & \frac{\partial R(x_{T},T)}{\partial x^i_{T}} + {\delta_{T}}^\top\frac{\partial g(x_{T},T)}{\partial x^i_{T}} = 0, && \\
    & g(x_{T},T) \leq 0, && \\
    & \delta_k \leq 0, \;\; {\delta_k}^\top g(x_k,u_k,k) = 0, \forall \; 0\leq k \leq T. &&
\end{flalign}
Thus, in order for multivariate optimal control problem \eqref{mopc} and game $\mathcal{G} = \left( \aset, \{\Gamma_i\}_{i\in\aset}, \{J_i\}_{i\in\aset},\{\mathcal{C}_k\}_{k\in\{0,\ldots,T\}},f \right)$ to have similar optimality conditions, the following must hold:
\begin{align}
\frac{\partial L^i(x^i_k,u_k,k)}{\partial x^i_k} &= \frac{\partial P(x_k,u_k,k)}{\partial x^i_k}, \label{eq:append-1} \\
\frac{\partial L^i(x^i_k,u_k,k)}{\partial u_k^i} &= \frac{\partial P(x_k,u_k,k)}{\partial u_k^i}, \\
\forall i \in \aset, \; k & = 0,1,\ldots,T-1; \; \text{and} \nonumber \\
\frac{\partial S^i(x^i_{T},T)}{\partial x^i_T} &= \frac{\partial R(x_{T},T)}{\partial x^i_T}, \label{eq:append-2} \\
\lambda^i_k = \xi_k, \quad & \mu^i_k = \delta_k, \; \forall i \in \aset.
\end{align}
When conditions \eqref{eq:append-1}-\eqref{eq:append-2} are satisfied, Lemma~\ref{lemma-1} states that $\mathcal{G}$ is a constrained dynamic potential game. Furthermore, since Lemma~\ref{lemma-1} is also a necessary condition for constrained potential dynamic games, we have proven that KKT conditions for solving constrained Nash equilibria are equivalent for original game $\mathcal{G}$ and \eqref{mopc} when a game is a constrained dynamic potential game. Therefore, a solution to the \eqref{mopc} will be a generalized Nash equilibrium of $\mathcal{G}$ for a constrained potential game. Readers are referred to proof of Theorem 1 in \cite{zazo2016dynamic} for further details.
\end{proof}

\vspace{-0.15cm}

\bibliographystyle{IEEEtran}
\bibliography{bibliography}

\end{document}